\newcommand{\ucb}{\text{UCB}}
\newtheorem{lemma}{Lemma}
\newtheorem{assume}{Assumption}
\newtheorem{prop}{Proposition}
\newtheorem{defn}{Definition}
\newcommand{\dom}{\mathbb{D}}
\newcommand{\doint}{\operatorname{do}}
\newcommand{\parents}{\operatorname{PA}}
\newcommand{\abs}{\boldsymbol{\alpha}}
\title{Using causal abstractions to accelerate decision-making in complex bandit problems}
\author{%
  Joel Dyer\thanks{Equal contribution. Correspondence to: \texttt{joel.dyer@cs.ox.ac.uk};$\ \ $   \texttt{nicholas.bishop@cs.ox.ac.uk};$\ \ $ \texttt{fabio.zennaro@uib.no}.}\\
  University of Oxford
  \And
  Nicholas Bishop\textsuperscript{$\ast$}\\
  University of Oxford
  \AND
  Anisoara Calinescu\\
  University of Oxford
  \And
  Michael Wooldridge\\
  University of Oxford
  \And
  Fabio Massimo Zennaro\\
  University of Bergen
}
\begin{document}

\maketitle

\begin{abstract}

    Although real-world decision-making problems can often be encoded as causal multi-armed bandits (CMABs) at different levels of abstraction, a general methodology exploiting the information and computational advantages of each abstraction level is missing. In this paper, we propose AT-UCB, an algorithm which efficiently exploits shared information between CMAB problem instances defined at different levels of abstraction. More specifically, AT-UCB leverages causal abstraction (CA) theory to explore within a cheap-to-simulate and coarse-grained CMAB instance, before employing the traditional upper confidence bound (UCB) algorithm on a restricted set of potentially optimal actions in the CMAB of interest, leading to significant reductions in cumulative regret when compared to the classical UCB algorithm. We illustrate the advantages of AT-UCB theoretically, through a novel upper bound on the cumulative regret, and empirically, by applying AT-UCB to epidemiological simulators with varying resolution and computational cost.
  
\end{abstract}

\section{Introduction}
\label{sec:intro}
Complex decision-making problems are naturally represented and studied as causal systems at multiple levels of abstraction. Consider the problem of controlling the spread of a disease through a population. One may develop a fine-grained, computationally-expensive agent-based model (ABM) as well as  a low-fidelity differential equation model (DEM) that is cheap to simulate. By experimenting with the DEM, we can quickly identify policy interventions that might be worth testing in the higher resolution ABM. But how should we divide our computational budget between the two models, and how should we use information learned from the DEM to inform our search over interventions in the ABM? A similar challenge appears in fields such as economics and ecology, that often require intervening in complex environments. More generally, integrating knowledge across different models is challenging, and practitioners often rely on ad-hoc solutions.

In this paper, we study how multiple causal models at different levels of abstraction can be formally related and jointly exploited to improve decision-making. We adopt the framework of causally abstracted multi-armed bandits (CAMABs) \citep{zennaro2024causally}, where causal systems are encoded as structural causal models (SCMs), relations across SCMs are formalized via causal abstraction (CA), and decision problems are expressed as multi-armed bandits (MABs). We focus on the scenario where a decision-maker is given two models, a realistic but expensive base model and a simplified but cheap abstracted model; their aim is to take advantage of running the cheap model in order to gain information for discovering optimal policies in the realistic model more efficiently. We propose abstract thresholding UCB (AT-UCB), an algorithm that first performs exploration in the abstracted model in order to filter out sub-optimal interventions before running the UCB algorithm on the base model over a restricted set of actions. We provide: (i) a theoretical analysis of the regret of AT-UCB, and (ii) simulations confirming our theoretical results.

\textbf{Related Work} Our work adopts the CAMAB framework introduced by \cite{zennaro2024causally}, which also introduces methods for transferring information across causal multi-armed bandit problems (CMABs). However, whilst their work focuses on transferring information from the base model to the abstracted model, we consider the more realistic inverse setting and provide a more rigorous theoretical analysis. More generally, CAMABs are subtly related to MAB problems which assume a form of clustering between arms such as regional bandits \citep{wang2018regional}, MABs with dependent arms \citep{singh2024multi}, and bandits with similar arms \citep{pesquerel2021stochastic}. Our AT-UCB algorithm is reminiscent of approaches that warm-start bandit algorithms with offline or confounded data \citep{cheung2024leveraging,yang2025best, sharma2020warm} as well as thresholding bandits \citep{mason2020finding, locatelli2016optimal}. For more discussion of related MAB problems, 
we refer the reader to \Cref{app:literatute_review}.

\section{Background}

Before describing AT-UCB in detail, we provide some background on CAMABs. We first define  structural causal models (SCMs) and interventions \citep{pearl2009causality,peters2017elements}.

\begin{defn}[SCM]
A structural causal model (SCM) $\mathcal{M}$ is a tuple $\langle \mathcal{X}, \mathcal{U}, \mathcal{F}, P \rangle$ where $\mathcal{X}=\left\{ X_i \right\}_{i=1}^d$ is a collection of $d$ endogenous variables, $\mathcal{U}=\left\{ U_i \right\}_{i=1}^d$ is a collection of exogenous variables, $\mathcal{F}=\left\{ f_{X_i} \right\}_{i=1}^d$ is a collection of structural functions $f_{X_i}: \dom[{\parents}(X_i)] \times \mathbb{U}_i \rightarrow \mathbb{X}_i$, where $\dom$ denotes the domain and with ${\parents}(X_i)\subseteq \mathcal{X} \setminus X_i$, and $P$ is a probability distribution over $\mathcal{U}$. 
\end{defn}

\begin{defn}[Intervention]
Given an SCM $\mathcal{M}$, an intervention $a={\doint}(\mathbf{X}=\mathbf{x})$ on a set of endogenous variables $\mathbf{X}$ associated with values $\mathbf{x}$, defines a new SCM $\mathcal{M}^{a}$ by replacing $f_i$ with the constant $x_i$ for each variable $X_i \in \mathbf{X}$.
\end{defn}

We assume that our SCMs induce directed acyclic graphs (DAGs) over the endogenous variables $\mathcal{X}$, such that $\parents$ is interpreted as graph-theoretical parents. For a given intervention $a$, each variable $X$ has an interventional distribution defined by pushing $P$ through the modified structural equations of the model $\mathcal{M}^{a}$. For ease of presentation, we let $X_{a} \sim P(X \mid a) $ denote a random variable distributed according to $X$ under the intervened model $\mathcal{M}^{a}$. An SCM naturally induces a multi-armed bandit problem known as a CMAB \citep{bareinboim2015bandits,lattimore2016causal}.
\begin{defn}[CMAB]
A causal multi-armed bandit (CMAB) $\mathcal{B}$ is a tuple $\langle \mathcal{M}, Y, \mathcal{A} \rangle$ where $\mathcal{M}$ is an SCM, $Y \subseteq \mathbb{R}$ is an endogenous outcome/reward variable in $\mathcal{X}$, and $\mathcal{A}=\{a_i\}_{i=1}^k$ is a set of $k$ interventions/arms/actions on $\mathcal{M}$.
\end{defn}
 A CMAB describes a basic sequential decision problem over a time 
horizon of length $n \in \mathbb{N}_{+}$. On each time step $t \in [n]$ the learner (who has no knowledge of $P$ or $\mathcal{F}$) is tasked with selecting an intervention $a \in \mathcal{A}$. Upon selecting intervention $a^{(t)}$ on time step $t$, the learner receives a reward $y^{(t)} \sim Y_{a^{(t)}}$. The goal of the learner is to maximize their cumulative reward over the time horizon. We let $\mu_{a^{(t)}}$ describe the mean reward associated with action $a$ and (for the sake of analysis) label actions so that $\mu_{a_{1}} \geq \mu_{a_{2}} \geq \cdots \geq \mu_{a_{k}}$. 

Lastly, we define $(\tau,\omega)$-abstraction, which formally relates two SCMs through a state map $\tau$ and an intervention map $\omega$. We also define two measures that capture the faithfulness of a $(\tau, \omega)$-abstraction in the bandit context; the interventional consistency (IC) error \citep{rubenstein2017causal,beckers2018abstracting, zennaro2023quantifying} and the reward discrepancy (RD) \citep{zennaro2024causally}.

\begin{defn}[Abstraction]
    Given a base SCM $\mathcal{M}$ with a set of interventions $\mathcal{A}$ and an abstracted SCM $\mathcal{M'}$ with a set of interventions $\mathcal{A}'$, an abstraction $\abs$ is a tuple $(\tau,\omega)$ where $\tau: \dom[\mathcal{X}]\rightarrow\dom[\mathcal{X'}]$ maps outcomes of the base model to outcomes of the abstracted model, and $\omega: \mathcal{A}\rightarrow\mathcal{A}'$ maps base interventions to abstracted interventions. 
\end{defn}


\begin{defn}[Abstraction measures (IC,RD)]
   Given an abstraction $\abs$, the interventional consistency (IC) error is:
\begin{equation}
    e(\abs) := \max_{a \in \mathcal{A}} D_{W_2} \left( \tau(P(Y\vert a)), P(Y'\vert \omega(a))) \right)
\end{equation}
and the reward discrepancy (RD) error is:
\begin{equation}
    s(\abs) := \max_{a \in \mathcal{A}} D_{W_2} \left( P(Y\vert a), \tau(P(Y\vert a)) \right),
\end{equation}
where $D_{W_2}$ is the $2$-Wasserstein distance, and $\tau(P(Y|a))$ is the distribution of $\tau(Y)$ when $Y \sim P(\cdot \mid a)$.

\end{defn}

In short, the IC error measures the difference between (i) intervening in the base model and abstracting the result, and (ii) abstracting the intervention and intervening in the abstracted model. Meanwhile, RD measures the degree to which rewards are distorted by the state map $\tau$. A CAMAB \citep{zennaro2024causally} consists of two CMABs and an abstraction formally relating them.

\begin{defn}[CAMAB]
    A causally abstracted MAB (CAMAB) $\mathcal{C}$ is a tuple $\langle \mathcal{B}, \mathcal{B'}, \abs \rangle$ made up by a base CMAB $\mathcal{B}$ and an abstracted CMAB $\mathcal{B}'$ related by an abstraction $\abs$.
\end{defn}
To distinguish mathematical quantities relating to $\mathcal{B}'$ rather than $\mathcal{B}$ we append an apostrophe. For example, the total number of arms in $\mathcal{B}$ is denoted by $k$, whilst the total number of arms in $\mathcal{B}'$ is denoted by $k'$. Given a CAMAB $\mathcal{C} = \langle \mathcal{B}, \mathcal{B'}, \abs \rangle$, we consider the following scenario. Assume that the learner will interact with the base CMAB $\mathcal{B}$ over a time horizon of length $n$. Prior to the start of the time horizon the learner is permitted to interact with the abstracted CMAB $\mathcal{B}^{\prime}$ for as long as they wish. However, each time the learner intervenes in $\mathcal{B}^{\prime}$ they pay a fixed cost $C > 0$. Note that this setting resembles the epidemiological example given in Section \ref{sec:intro}. 
A natural goal for the learner is to minimize their regret with respect to an oracle who does not spend any time interacting with $\mathcal{B}^{\prime}$ and takes the optimal intervention $a_{1}$ on every time step when interacting with $\mathcal{B}$.





\begin{defn}[Cumulative Regret]
    Given a CAMAB $\mathcal{C}$, the expected cumulative regret of the learner is $\mathcal{R}_{n}(\mathcal{C}) := n'\cdot C + \sum_{t=1}^n \Delta_{a^{(t)}}$, where $n^{\prime}$ denotes the number of interventions performed in $\mathcal{B}^{\prime}$, and $\Delta_{a} = \mu_{a_1} - \mu_{a}$ denotes the optimality gap between action $a$ and action $a_{1}$.
\end{defn}

\section{Methods and theory}

To propose an algorithm that can exploit the structure of a CAMAB, we first characterize the relation between: (i) the mean $\mu_{a'_1}$ of the true optimal arm in the abstracted CMAB $\mathcal{B}'$; and (ii) the mean $\mu_{\omega(a_1)}$ of the image through $\omega$ of the true optimal arm in the base CMAB $\mathcal{B}$.

\begin{prop}\label{prop:bound_absoptbase}
Given a CAMAB $\mathcal{C}$, let $\abs = \langle \tau, \omega \rangle$ be an approximate abstraction with IC error $e(\abs)$ and RD error $s(\abs)$, and define $\epsilon(\abs) = 2\left(s(\abs)+e(\abs)\right)$. Then it holds that:
    \begin{equation}
        \mu_{a'_1} - \epsilon(\abs) \leq \mu_{\omega(a_1)} \leq \mu_{a'_1}.
    \end{equation}
\end{prop}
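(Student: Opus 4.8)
The plan is to treat the two inequalities separately. The upper bound $\mu_{\omega(a_1)} \le \mu_{a'_1}$ is essentially immediate: by definition $\omega$ maps into $\mathcal{A}'$, so $\omega(a_1)$ is a legitimate arm of the abstracted CMAB $\mathcal{B}'$, and $a'_1$ is by construction (using the labelling convention $\mu_{a'_1}\ge\mu_{a'_2}\ge\cdots$) the arm of $\mathcal{B}'$ with the largest mean reward. Hence $\mu_{\omega(a_1)}\le\mu_{a'_1}$ with nothing further to prove.

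For the lower bound, the key step is a per-arm comparison lemma: for every $a\in\mathcal{A}$,
\[
|\mu_a - \mu_{\omega(a)}| \;\le\; s(\abs) + e(\abs).
\]
I would obtain this by chaining the two abstraction measures through the intermediate distribution $\tau(P(Y\mid a))$. Since $D_{W_2}$ is a metric on distributions with finite second moment, the triangle inequality gives
\[
D_{W_2}\!\big(P(Y\mid a),\,P(Y'\mid\omega(a))\big) \;\le\; D_{W_2}\!\big(P(Y\mid a),\,\tau(P(Y\mid a))\big) \;+\; D_{W_2}\!\big(\tau(P(Y\mid a)),\,P(Y'\mid\omega(a))\big) \;\le\; s(\abs)+e(\abs),
\]
where the two summands are bounded by the definitions of the RD and IC errors respectively. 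Then, because $x\mapsto x$ is $1$-Lipschitz, Kantorovich--Rubinstein duality (equivalently, Jensen applied to an optimal coupling) yields $|\mathbb{E}[X]-\mathbb{E}[X']|\le D_{W_1}\le D_{W_2}$ for any two real random variables, which applied to $X\sim P(Y\mid a)$ and $X'\sim P(Y'\mid\omega(a))$ gives the lemma.

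With the lemma in hand, write $c(\abs):=s(\abs)+e(\abs)$ so that $\epsilon(\abs)=2c(\abs)$. Applying the lemma to $a_1$ gives $\mu_{\omega(a_1)}\ge \mu_{a_1}-c(\abs)$. On the other side, using surjectivity of $\omega$ (i.e.\ $\mathcal{A}'=\omega(\mathcal{A})$, as is standard in causal abstraction theory), pick $a^\star\in\mathcal{A}$ with $\omega(a^\star)=a'_1$; the lemma then gives $\mu_{a'_1}=\mu_{\omega(a^\star)}\le \mu_{a^\star}+c(\abs)\le \mu_{a_1}+c(\abs)$, the last inequality because $a_1$ is optimal in $\mathcal{B}$. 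Subtracting, $\mu_{a'_1}-\mu_{\omega(a_1)}\le 2c(\abs)=\epsilon(\abs)$, i.e.\ $\mu_{a'_1}-\epsilon(\abs)\le\mu_{\omega(a_1)}$, completing the proof.

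The step I expect to require the most care is the passage from Wasserstein control of entire distributions to control of their means: this is where finiteness of second moments and the metric/duality structure of $D_{W_2}$ enter, and it is also where one must ensure that $\omega$ is surjective so that the abstracted optimum $a'_1$ actually has a base preimage $a^\star$ to which the lemma can be applied. The remainder is just triangle inequalities and the optimality of $a_1$ and $a'_1$ in their respective CMABs.
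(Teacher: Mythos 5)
Your proof is correct and follows essentially the same route as the paper: the upper bound is immediate from the optimality of $a'_1$ in $\mathcal{B}'$, and the lower bound comes from applying the per-arm bound $\vert\mu_a - \mu_{\omega(a)}\vert \leq s(\abs)+e(\abs)$ (which the paper imports as Proposition 4.1 of \citet{zennaro2024causally} rather than re-deriving it via the $W_2$ triangle inequality and $W_1 \leq W_2$ as you do) to $a_1$ and to a preimage of $a'_1$, combined with the optimality of $a_1$ in $\mathcal{B}$. The paper packages the double application as an intermediate lemma on optimality gaps (\Cref{lem:opt_gaps_base_abs}), but the underlying telescoping is identical, and both arguments rely, as you correctly flag, on $\omega^{-1}(a'_1)$ being nonempty.
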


Informally, \Cref{prop:bound_absoptbase} bounds the distance between the abstracted optimal arm and the image of the base optimal arm. It thus suggests that the search for the optimal base arm $a_1$ can be accelerated in the presence of an approximate causal abstraction $\abs$ by restricting attention to base interventions in an interval defined by \Cref{prop:bound_absoptbase} as $a \in \omega^{-1}\left(\mathcal{D}^c\right)$, where $\mathcal{D}^c$ denotes the complement of the set:
\begin{equation}
    \mathcal{D} := \left\{ a' \in \mathcal{A}':\mu_{a'}<\mu_{a_{1}'}-\epsilon(\abs) \right\}.
\end{equation}
This observation informs our contribution, which is outlined in \Cref{alg:at-ucb} and combines the initial use of a thresholding bandit algorithm in the abstract CMAB -- performed through uniform exploration of the $k'$ abstract arms, for the sake of simplicity -- with a standard upper confidence bound (UCB) \citep{lattimore2020bandit} algorithm in the base CMAB. 

\begin{algorithm}[tb]
	\caption{AT-UCB}\label{alg:at-ucb}
	\begin{algorithmic}[1]
		\STATE { \textbf{Input:} } Base and abstract CMABs $\mathcal{M}$, $\mathcal{M}'$; base and abstract time horizons $n$, $n'$; UCB parameter $\delta > 0$; approximate abstraction $\abs$; threshold $\varepsilon > 0$
		\STATE { \textbf{Output:} } estimated optimal arm $\hat{a}_1$
		
		\STATE Pull each abstract arm $a' \in \mathcal{A}'$ a total of $n'/k'$ times each to construct estimates $\hat{\mu}_{a'}$.
            \STATE Construct 
            \begin{equation}
                \hat{\mathcal{D}} = \{ a' \in \mathcal{A'} \mid \hat{\mu}_{a'} < \max_{\tilde{a}' \in \mathcal{A}'} \hat{\mu}_{\tilde{a}'} - 
                \varepsilon
                \}.
            \end{equation}
		\STATE Run UCB on $\hat{\mathcal{A}} := \omega^{-1}(\hat{\mathcal{D}}^c) = \cup_{a' \in \hat{\mathcal{D}}^c} \omega^{-1}(a')$ for $n$ time steps and with parameter $\delta$.
		
		\STATE { \textbf{Return:} } $\hat{a}_1 = \arg\max_{a \in \hat{\mathcal{A}}} \ucb_{a}\left(n, \delta\right)$
		
	\end{algorithmic}
\end{algorithm}

\subsection{Regret analysis}\label{sec:regret}

Next, we analyse the expected cumulative regret incurred by \Cref{alg:at-ucb}. We work under the standard assumption of 1-subgaussianity on the reward distributions and assume the abstract horizon $n'$ is sufficiently large to bound exponentials appearing as a result of the application of Chernoff bounds when bounding the regret incurred by experimenting in $\mathcal{M}$: 

%
%
%


\begin{assume}\label{ass:abs_hor_len}
    With $\Delta'_{a'} := \mu_{a'_1} - \mu_{a'}$ , the abstract horizon length $n'$ satisfies,
    \begin{equation}
        n' \geq 4k' \log(nk') \max\left\{{\left(\epsilon(\abs) - \Delta'_{\omega(a_1)}\right)^{-2}} , {\left(\min_{a'\in\mathcal{D}}\Delta'_{a'} - \epsilon(\abs)\right)^{-2}} \right\}.
    \end{equation}
\end{assume}


Under these assumptions, the following upper bound on the expected cumulative regret for \Cref{alg:at-ucb} holds, with the proof deferred to \Cref{app:proofs}:

\begin{prop}\label{prop:at-ucb_ecr}
    The cumulative regret $\mathcal{R}_{n}(\mathcal{C})$ under \Cref{alg:at-ucb} with $\varepsilon = \epsilon(\abs)$ satisfies
    \begin{equation}
    \mathcal{R}_{n}(\mathcal{C}) \leq n' \cdot C + \left(3\sum_{a \in \omega^{-1}(\mathcal{D}^c)} \Delta_{a} + \sum_{\substack{a \in \omega^{-1}(\mathcal{D}^c),\\\Delta_a > 0}} \frac{16 \log(n)}{\Delta_a}\right) + 2 \max_{a\in \mathcal{A}} \Delta_a.
\end{equation}
\end{prop}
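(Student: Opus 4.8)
The plan is to decompose the cumulative regret into two independent contributions: the cost $n' \cdot C$ incurred during the exploration phase in $\mathcal{M}'$, which is deterministic, and the regret $\sum_{t=1}^n \Delta_{a^{(t)}}$ incurred while running UCB on the (random) restricted set $\hat{\mathcal{A}} = \omega^{-1}(\hat{\mathcal{D}}^c)$. For the second term I would condition on the event $\mathcal{E} := \{\hat{\mathcal{D}} = \mathcal{D}\}$ that the empirical thresholding step recovers the true set $\mathcal{D}$, and split the expected base-model regret as $\mathbb{E}[\sum_t \Delta_{a^{(t)}}] = \mathbb{E}[\sum_t \Delta_{a^{(t)}} \mid \mathcal{E}]\,\Pr(\mathcal{E}) + \mathbb{E}[\sum_t \Delta_{a^{(t)}} \mid \mathcal{E}^c]\,\Pr(\mathcal{E}^c)$. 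On $\mathcal{E}$, the arm set is exactly $\omega^{-1}(\mathcal{D}^c)$, which contains the optimal base arm $a_1$ (this is where \Cref{prop:bound_absoptbase} enters, guaranteeing $\omega(a_1) \notin \mathcal{D}$), so the standard finite-horizon UCB regret bound applies: with the UCB parameter chosen appropriately this gives a contribution of the form $\sum_{a \in \omega^{-1}(\mathcal{D}^c),\,\Delta_a>0} \frac{16\log n}{\Delta_a} + (\text{constant})\sum_{a\in\omega^{-1}(\mathcal{D}^c)}\Delta_a$, recovering the second bracketed term. On $\mathcal{E}^c$ I would use the trivial bound $\sum_t \Delta_{a^{(t)}} \le n \max_a \Delta_a$ and argue $\Pr(\mathcal{E}^c)$ is small — this is precisely what \Cref{ass:abs_hor_len} is engineered to ensure.

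The key step controlling $\Pr(\mathcal{E}^c)$ is a concentration argument on the abstract estimates $\hat\mu_{a'}$, each formed from $n'/k'$ i.i.d. 1-subgaussian samples. With $\varepsilon = \epsilon(\abs)$, the event $\hat{\mathcal{D}} \neq \mathcal{D}$ requires either some $a' \in \mathcal{D}$ to be misclassified out of $\hat{\mathcal{D}}$, or some $a' \notin \mathcal{D}$ (in particular $\omega(a_1)$, which has gap $\Delta'_{\omega(a_1)} < \epsilon(\abs)$) to be misclassified into $\hat{\mathcal{D}}$. A union bound over the $k'$ arms together with a Chernoff/Hoeffding-subgaussian tail bound shows each such misclassification probability is at most $\exp(-\frac{n'}{4k'}\gamma^2)$ for the relevant margin $\gamma$, where the two margins are exactly $\epsilon(\abs) - \Delta'_{\omega(a_1)}$ and $\min_{a'\in\mathcal{D}}\Delta'_{a'} - \epsilon(\abs)$ — the two quantities appearing in \Cref{ass:abs_hor_len}. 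Plugging in the assumed lower bound on $n'$ forces $\Pr(\mathcal{E}^c) \le 2/(nk') \le 2/n$, so that $n\max_a\Delta_a \cdot \Pr(\mathcal{E}^c) \le 2\max_a \Delta_a$, which is the final additive term. (I would need to be slightly careful that the estimated maximum $\max_{\tilde a'}\hat\mu_{\tilde a'}$ also concentrates around $\mu_{a'_1}$, but this follows from the same union bound with an extra factor absorbed into the logarithm, and since $\omega(a_1)\notin\mathcal{D}$ its empirical mean is a valid lower bound for the empirical max on $\mathcal{E}$.)

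Assembling the three pieces — $n' C$, the UCB term scaled by $\Pr(\mathcal{E}) \le 1$, and the $\mathcal{E}^c$ term bounded by $2\max_a \Delta_a$ — yields the stated inequality. The main obstacle I anticipate is the bookkeeping in the concentration step: one must simultaneously handle the random threshold (the empirical max, not a fixed quantity) and the two-sided nature of the misclassification event, and verify that the gap structure in \Cref{ass:abs_hor_len} is exactly tight enough to drive both failure modes below $1/(nk')$ after the union bound; the additional factor of $3$ (rather than $2$) multiplying $\sum_{a\in\omega^{-1}(\mathcal{D}^c)}\Delta_a$ in the final bound is likely an artifact of absorbing a lower-order $\sum_a \Delta_a$ term from the UCB analysis on $\mathcal{E}$ together with the behaviour on $\mathcal{E}^c$ restricted to the correct arm set, so I would track constants carefully there rather than optimize them.
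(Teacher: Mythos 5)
Your overall architecture matches the paper's: split off the deterministic cost $n'\cdot C$, condition on a ``good'' filtering event, apply the standard UCB bound on that event, and pay $n\max_{a}\Delta_a\cdot\mathbb{P}(\text{bad event})$ otherwise, with the bad-event probability driven below $2/n$ by subgaussian concentration and \Cref{ass:abs_hor_len}. The gap is in your choice of good event. You condition on exact recovery, $\mathcal{E}=\{\hat{\mathcal{D}}=\mathcal{D}\}$, but $\mathbb{P}(\mathcal{E}^c)\leq 2/n$ is not provable from \Cref{ass:abs_hor_len}: that assumption only supplies concentration margins for $\omega(a_1)$ (margin $\epsilon(\abs)-\Delta'_{\omega(a_1)}$) and for the arms in $\mathcal{D}$ (margin $\min_{a'\in\mathcal{D}}\Delta'_{a'}-\epsilon(\abs)$). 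An arm $a'\in\mathcal{D}^c\setminus\{\omega(a_1)\}$ whose gap $\Delta'_{a'}$ equals, or sits just below, $\epsilon(\abs)$ has no margin at all and lands in $\hat{\mathcal{D}}$ with probability bounded away from zero (roughly $1/2$ at the boundary) no matter how large $n'$ is. Hence $\mathbb{P}(\mathcal{E}^c)$ can be $\Theta(1)$, and your final term $n\max_a\Delta_a\cdot\mathbb{P}(\mathcal{E}^c)$ would then be linear in $n$, destroying the bound.

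The fix --- and what the paper actually does --- is to weaken the good event to $G=\{\omega(a_1)\notin\hat{\mathcal{D}}\}\cap\{\mathcal{D}\subseteq\hat{\mathcal{D}}\}$. This is all that is needed: the first part keeps $a_1$ in $\hat{\mathcal{A}}$, and the second gives $\hat{\mathcal{A}}\subseteq\omega^{-1}(\mathcal{D}^c)$, so the UCB regret on $G$ is dominated by the UCB bound over the full set $\omega^{-1}(\mathcal{D}^c)$; spuriously discarding additional arms of $\mathcal{D}^c$ other than $\omega(a_1)$ only shrinks the arm set and cannot increase the bound. Crucially, $G^c$ decomposes (via a union bound that also absorbs the randomness of the empirical maximum, as you anticipate) into exactly the misclassification events for which \Cref{ass:abs_hor_len} provides margins, yielding $\mathbb{P}(G^c)\leq 2/n$. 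With this substitution the rest of your argument goes through unchanged; the factor $3$ is simply the constant in the cited UCB regret bound, not an artifact of the conditioning.
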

When $C$ is sufficiently small and $\epsilon(\abs)$ is small enough that $\vert{\omega^{-1}(\mathcal{D}^c)\vert} \ll k$, \Cref{prop:at-ucb_ecr} gives a lower regret bound than the standard upper bound for UCB.

\section{Experiments}\label{sec:exp}

In this section, we study \Cref{alg:at-ucb} empirically, using two epidemic simulators. For $\mathcal{M}$, we consider a susceptible-infected-recovered-susceptible model. In this model, $k$ spatial regions are connected in a fully-connected graph, and different regions carry and spread the hypothetical virus more easily. The epidemic is simulated over discrete time steps of size $\Delta t > 0$. Further details are provided in \Cref{app:sirs_comms}. 
The learner is tasked with choosing a community $a \in \{1,\ldots,k\}$, on which  a fixed-duration lockdown, starting at a prespecified time, will be imposed. The reward observed is the negation of total infections over the simulated horizon. The learner enjoys access to a high-level model $\mathcal{M}'$, comprised of large abstract communities, related to $\mathcal{M}$ via an approximate causal abstraction. Abstract communities $a' \in \{1,\ldots,k'\}$ with $k' < k$ are related to base communities via $\omega$, and $\mathcal{M}'$ is simulated at a coarser time resolution than $\mathcal{M}$, such that $\Delta t' > \Delta t$. Further details on the abstraction used are provided in \Cref{app:sirs_comms_abs}.

\begin{wrapfigure}{r}{0.3\textwidth}
  \vspace{-1.7em}
  \begin{center}
    \includegraphics[width=0.3\textwidth]{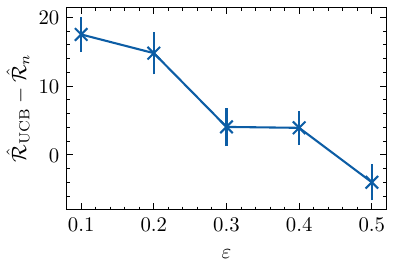}
  \end{center}
  \vspace{-1em}
  \caption{Difference in empirical average cumulative regret as a function of $\varepsilon$ for the experiment in \Cref{sec:exp}.}\label{fig:sirs_nprime20}
  \vspace{-2.4em}
\end{wrapfigure}

In \Cref{fig:sirs_nprime20}, we show the difference in cumulative regret from running UCB in the base bandit and running AT-UCB with $n'=20$, as a function of $\varepsilon$. Both procedures use $n=100$, and error bars show the standard error in the mean from 10 repeats. Larger values indicate that AT-UCB achieves comparatively lower regret. For sufficiently small $\varepsilon$, a lower regret is achieved using the approximate causal abstraction and \Cref{alg:at-ucb} instead of the standard UCB algorithm. In this instance, since $\omega(a_1) = a_1'$ and the abstract reward distributions have low variance relative to the distances between arm means, the difference in regret decreases with increasing $\varepsilon$ as suboptimal arms are included while $\omega(a_1)$ is rarely excluded even for low $\varepsilon$.

\section{Conclusion}
\label{sec:con}
We have introduced AT-UCB, a CAMAB algorithm that exploits CA theory to explore an abstract CMAB so that a more complex CMAB can be solved efficiently via immediate elimination of sub-optimal arms. Whilst we have shown that AT-UCB provides tighter regret guarantees given a sufficiently accurate abstraction, AT-UCB has several limitations, discussed further in \Cref{app:lims}, that we aim to address in future work. In particular, AT-UCB does not exploit causal information provided by the SCMs $\mathcal{M}$ and $\mathcal{M}'$ to expedite exploration. Similarly, assessing the validity of \Cref{ass:abs_hor_len} a priori relies on knowledge of abstract optimality gap sizes, which may be unrealistic. Moreover, the regret guarantee provided by \Cref{prop:at-ucb_ecr} relies on setting $\varepsilon = \epsilon(\abs)$, which may be unknown.

\ack{The authors are extremely grateful to Theodoros Damoulas for his feedback, helpful discussions, and his support. JD, NB, AC, and MW acknowledge funding from a
UKRI AI World Leading Researcher Fellowship awarded
to Wooldridge (grant EP/W002949/1). MW and AC also
acknowledge funding from Trustworthy AI - Integrating
Learning, Optimisation and Reasoning (TAILOR), a project
funded by the European Union Horizon2020 research and
innovation program under Grant Agreement 952215. }

\bibliographystyle{plainnat}
\bibliography{ref}

\begin{thebibliography}{27}
\providecommand{\natexlab}[1]{#1}
\providecommand{\url}[1]{\texttt{#1}}
\expandafter\ifx\csname urlstyle\endcsname\relax
  \providecommand{\doi}[1]{doi: #1}\else
  \providecommand{\doi}{doi: \begingroup \urlstyle{rm}\Url}\fi

\bibitem[Bareinboim et~al.(2015)Bareinboim, Forney, and Pearl]{bareinboim2015bandits}
Elias Bareinboim, Andrew Forney, and Judea Pearl.
\newblock Bandits with unobserved confounders: A causal approach.
\newblock \emph{Advances in Neural Information Processing Systems}, 28:\penalty0 1342--1350, 2015.

\bibitem[Beckers and Halpern(2019)]{beckers2018abstracting}
Sander Beckers and Joseph~Y Halpern.
\newblock Abstracting causal models.
\newblock In \emph{Proceedings of the AAAI Conference on Artificial Intelligence}, volume~33, pages 2678--2685, 2019.

\bibitem[Bilodeau et~al.(2022)Bilodeau, Wang, and Roy]{bilodeau2022adaptively}
Blair Bilodeau, Linbo Wang, and Daniel~M Roy.
\newblock Adaptively exploiting d-separators with causal bandits.
\newblock \emph{arXiv preprint arXiv:2202.05100}, 2022.

\bibitem[Caron et~al.(2012)Caron, Kveton, Lelarge, and Bhagat]{caron2012leveraging}
St{\'e}phane Caron, Branislav Kveton, Marc Lelarge, and Smriti Bhagat.
\newblock Leveraging side observations in stochastic bandits.
\newblock \emph{arXiv preprint arXiv:1210.4839}, 2012.

\bibitem[Cheung and Lyu(2024)]{cheung2024leveraging}
Wang~Chi Cheung and Lixing Lyu.
\newblock Leveraging (biased) information: Multi-armed bandits with offline data.
\newblock \emph{arXiv preprint arXiv:2405.02594}, 2024.

\bibitem[Dyer et~al.(2024)Dyer, Bishop, Felekis, Zennaro, Calinescu, Damoulas, and Wooldridge]{dyer2024interventionally}
Joel Dyer, Nicholas Bishop, Yorgos Felekis, Fabio~Massimo Zennaro, Anisoara Calinescu, Theodoros Damoulas, and Michael Wooldridge.
\newblock Interventionally consistent surrogates for complex simulation models.
\newblock \emph{Advances in Neural Information Processing Systems}, 37:\penalty0 21814--21841, 2024.

\bibitem[Lattimore et~al.(2016)Lattimore, Lattimore, and Reid]{lattimore2016causal}
Finnian Lattimore, Tor Lattimore, and Mark~D Reid.
\newblock Causal bandits: Learning good interventions via causal inference.
\newblock In \emph{Advances in Neural Information Processing Systems}, pages 1181--1189, 2016.

\bibitem[Lattimore and Szepesv{\'a}ri(2020)]{lattimore2020bandit}
Tor Lattimore and Csaba Szepesv{\'a}ri.
\newblock \emph{Bandit algorithms}.
\newblock Cambridge University Press, 2020.

\bibitem[Lee and Bareinboim(2019)]{lee2019structural}
Sanghack Lee and Elias Bareinboim.
\newblock Structural causal bandits with non-manipulable variables.
\newblock In \emph{Proceedings of the AAAI Conference on Artificial Intelligence}, volume~33, pages 4164--4172, 2019.

\bibitem[Locatelli et~al.(2016)Locatelli, Gutzeit, and Carpentier]{locatelli2016optimal}
Andrea Locatelli, Maurilio Gutzeit, and Alexandra Carpentier.
\newblock An optimal algorithm for the thresholding bandit problem.
\newblock In \emph{International Conference on Machine Learning}, pages 1690--1698. PMLR, 2016.

\bibitem[Lu et~al.(2020)Lu, Meisami, Tewari, and Yan]{lu2020regret}
Yangyi Lu, Amirhossein Meisami, Ambuj Tewari, and William Yan.
\newblock Regret analysis of bandit problems with causal background knowledge.
\newblock In \emph{Conference on Uncertainty in Artificial Intelligence}, pages 141--150. PMLR, 2020.

\bibitem[Mannor and Shamir(2011)]{mannor2011bandits}
Shie Mannor and Ohad Shamir.
\newblock From bandits to experts: On the value of side-observations.
\newblock \emph{Advances in neural information processing systems}, 24, 2011.

\bibitem[Mason et~al.(2020)Mason, Jain, Tripathy, and Nowak]{mason2020finding}
Blake Mason, Lalit Jain, Ardhendu Tripathy, and Robert Nowak.
\newblock Finding all \textbackslash epsilon-good arms in stochastic bandits.
\newblock In \emph{Advances in Neural Information Processing Systems}, volume~33, pages 20707--20718, 2020.

\bibitem[Pearl(2009)]{pearl2009causality}
Judea Pearl.
\newblock \emph{Causality}.
\newblock Cambridge University Press, 2009.

\bibitem[Pesquerel et~al.(2021)Pesquerel, Saber, and Maillard]{pesquerel2021stochastic}
Fabien Pesquerel, Hassan Saber, and Odalric-Ambrym Maillard.
\newblock Stochastic bandits with groups of similar arms.
\newblock \emph{Advances in Neural Information Processing Systems}, 34:\penalty0 19461--19472, 2021.

\bibitem[Peters et~al.(2017)Peters, Janzing, and Sch{\"o}lkopf]{peters2017elements}
Jonas Peters, Dominik Janzing, and Bernhard Sch{\"o}lkopf.
\newblock \emph{Elements of causal inference: {F}oundations and learning algorithms}.
\newblock MIT Press, 2017.

\bibitem[Qi et~al.(2024)Qi, Fei, and Zhu]{qi2024graph}
Han Qi, Guo Fei, and Li~Zhu.
\newblock Graph feedback bandits with similar arms.
\newblock \emph{arXiv preprint arXiv:2405.11171}, 2024.

\bibitem[Rischel(2020)]{rischel2020category}
Eigil~Fjeldgren Rischel.
\newblock The category theory of causal models.
\newblock Master's thesis, University of Copenhagen, 2020.

\bibitem[Rubenstein et~al.(2017)Rubenstein, Weichwald, Bongers, Mooij, Janzing, Grosse-Wentrup, and Sch{\"o}lkopf]{rubenstein2017causal}
Paul~K Rubenstein, Sebastian Weichwald, Stephan Bongers, Joris~M Mooij, Dominik Janzing, Moritz Grosse-Wentrup, and Bernhard Sch{\"o}lkopf.
\newblock Causal consistency of structural equation models.
\newblock In \emph{33rd Conference on Uncertainty in Artificial Intelligence (UAI 2017)}, pages 808--817. Curran Associates, Inc., 2017.

\bibitem[Schooltink and Zennaro(2024)]{schooltink2024aligning}
Willem Schooltink and Fabio~Massimo Zennaro.
\newblock Aligning graphical and functional causal abstractions.
\newblock \emph{arXiv preprint arXiv:2412.17080}, 2024.

\bibitem[Sharma et~al.(2020)Sharma, Basu, Shanmugam, and Shakkottai]{sharma2020warm}
Nihal Sharma, Soumya Basu, Karthikeyan Shanmugam, and Sanjay Shakkottai.
\newblock Warm starting bandits with side information from confounded data.
\newblock \emph{arXiv preprint arXiv:2002.08405}, 2020.

\bibitem[Singh et~al.(2024)Singh, Liu, Sun, and Shroff]{singh2024multi}
Rahul Singh, Fang Liu, Yin Sun, and Ness Shroff.
\newblock Multi-armed bandits with dependent arms.
\newblock \emph{Machine Learning}, 113\penalty0 (1):\penalty0 45--71, 2024.

\bibitem[Wang et~al.(2018)Wang, Zhou, and Shen]{wang2018regional}
Zhiyang Wang, Ruida Zhou, and Cong Shen.
\newblock Regional multi-armed bandits.
\newblock In \emph{International Conference on Artificial Intelligence and Statistics}, pages 510--518. PMLR, 2018.

\bibitem[Yang et~al.(2025)Yang, Tan, and Cheung]{yang2025best}
Le~Yang, Vincent~YF Tan, and Wang~Chi Cheung.
\newblock Best arm identification with possibly biased offline data.
\newblock \emph{arXiv preprint arXiv:2505.23165}, 2025.

\bibitem[Zennaro et~al.(2023)Zennaro, Turrini, and Damoulas]{zennaro2023quantifying}
Fabio~Massimo Zennaro, Paolo Turrini, and Theodoros Damoulas.
\newblock Quantifying consistency and information loss for causal abstraction learning.
\newblock In \emph{Proceedings of the Thrity-Second International Conference on International Joint Conferences on Artificial Intelligence}, 2023.

\bibitem[Zennaro et~al.(2024)Zennaro, Bishop, Dyer, Felekis, Calinescu, Wooldridge, and Damoulas]{zennaro2024causally}
Fabio~Massimo Zennaro, Nicholas Bishop, Joel Dyer, Yorgos Felekis, Anisoara Calinescu, Michael Wooldridge, and Theodoros Damoulas.
\newblock Causally abstracted multi-armed bandits.
\newblock In \emph{Proceedings of the Fortieth Conference on Uncertainty in Artificial Intelligence}, pages 4109--4139, 2024.

\bibitem[Zhang and Bareinboim(2017)]{zhang2017transfer}
Junzhe Zhang and Elias Bareinboim.
\newblock Transfer learning in multi-armed bandit: a causal approach.
\newblock In \emph{Proceedings of the 16th Conference on Autonomous Agents and MultiAgent Systems}, pages 1778--1780, 2017.

\end{thebibliography}

\appendix

\section{Further literature review} \label{app:literatute_review}

In this appendix, we discuss further relations between our work and the literature.

\paragraph{CA.}
Differently from \cite{zennaro2024causally} which relies on $\alpha$-abstractions \citep{rischel2020category}, we adopt the formalism of $\tau$-$\omega$ abstraction \citep{rubenstein2017causal,beckers2018abstracting}. Although the two frameworks are related \citep{schooltink2024aligning,beckers2018abstracting}, the $\tau$-$\omega$ abstraction allows us to deal with the intervention map $\omega$ separately from the variable map $\tau$. The $\tau$-$\omega$ formalism was also used to motivate a procedure for learning interventionally consistent surrogate models of complex simulators in \citet{dyer2024interventionally}; the current work differs in that we consider how to exploit knowledge of an approximate causal abstraction to accelerate decision-making, rather than the problem of learning approximate causal abstractions in the first place.

\paragraph{MABs.}
Like regional MABs \citep{wang2018regional} and MABs with dependent arms \citep{singh2024multi}, CAMABs implicitly cluster base actions through abstraction; however, in general, CA cannot provide guarantees on the functional form or the parameters of the actions grouped together. 
As actions are formally related via abstractions in CAMABs, taking a single action (base or abstract) provides information about several other actions in a fashion similar to MABs with side information \citep{mannor2011bandits,qi2024graph, caron2012leveraging}. However, exploiting this information is less trivial in CAMABs, as abstractions may introduce arbitrary biases, as captured through abstraction measures such as the interventional consistency error and reward discrepancy error. AT-UCB uses abstract data to rapidly eliminate arms before interacting with a base model, in a fashion reminiscent of MABs with offline information \citep{cheung2024leveraging,yang2025best,sharma2020warm}, which use offline data to warm start MAB algorithms. Employing such warm start algorithms in the CAMAB setting is complicated by the biases introduced by abstractions discussed previously. The arm elimination procedure adopted by AT-UCB is also similar in nature to thresholding bandit algorithms \citep{locatelli2016optimal}, which attempt to identify a subset of arms with mean reward above given a threshold. Unfortunately, many thresholding bandit algorithms assume a fixed threshold rather than a relative one that depends of the mean reward of the optimal arm, making them incompatible AT-UCB. To the best of our knowledge, the only exception to this is \cite{mason2020finding}, who aim to identify a set of $\epsilon$-satisficing arms.


\paragraph{CMABs.} A wide range of algorithms have been proposed for the CMAB setting which exploit knowledge of the causal graph associated with the underlying SCM to expedite exploration \citep{lattimore2016causal,lu2020regret,bilodeau2022adaptively}. Similar to \citep{lee2019structural}, we also perform arm elimination to efficiently solve a CMAB; however, where \cite{lee2019structural} filters out actions through the identification of a possibly optimal minimal intervention set via causal analysis, we rely on CA to estimate a similar set of possibly optimal actions.
Finally, CAs allow for information transfer akin to transfer learning in CMABs \citep{zhang2017transfer}; however, where transfer learning in  \cite{zhang2017transfer} requires knowing the causal graphs of both SCMs (and their overlap), a $\tau$-$\omega$ abstraction requires only knowledge of a map between variables and between interventions.

\section{Further mathematical results and proofs}\label{app:proofs}

In this appendix, we provide further theoretical results to prove the claims in the main body. For completeness, we recall the following result from \citet{zennaro2024causally}, which we utilize in our proofs.

\begin{lemma}[Proposition 4.1 of \citet{zennaro2024causally}]\label{lem:means_bounded_absrew}
    Given a CAMAB $\mathcal{C}$, for any $a \in \mathcal{A}$ and approximate abstraction $\abs = \langle \tau,\omega \rangle$,
    \begin{equation}
        \vert{\mu_{a} - \mu_{\omega(a)}\vert} \leq s(\abs) + e(\abs).
    \end{equation}
\end{lemma}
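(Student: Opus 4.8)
The plan is to reduce the claim to the elementary fact that the gap between the means of two real-valued distributions is controlled by their Wasserstein distance, and then to split $\mu_a - \mu_{\omega(a)}$ by the triangle inequality so that the two resulting pieces are exactly the quantities bounded by $s(\abs)$ and $e(\abs)$. Recall that $\mu_a = \mathbb{E}[Y_a]$ with $Y_a \sim P(Y \mid a)$ and $\mu_{\omega(a)} = \mathbb{E}[Y'_{\omega(a)}]$ with $Y'_{\omega(a)} \sim P(Y' \mid \omega(a))$. The natural intermediate object is the push-forward $\tau(P(Y\mid a))$, i.e.\ the law of $\tau(Y_a)$ on the abstract reward space, whose mean I denote $m_a := \mathbb{E}[\tau(Y_a)]$.

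First I would establish the auxiliary inequality that, for any two probability measures $\rho,\nu$ on $\mathbb{R}$ with finite second moment, $\lvert \mathbb{E}_\rho[X] - \mathbb{E}_\nu[X]\rvert \leq D_{W_2}(\rho,\nu)$. This follows in two steps: by Kantorovich--Rubinstein duality, $D_{W_1}(\rho,\nu) = \sup_{\mathrm{Lip}(f)\leq 1}\lvert \mathbb{E}_\rho f - \mathbb{E}_\nu f\rvert$, and the $1$-Lipschitz test function $f(x)=x$ gives $\lvert \mathbb{E}_\rho[X]-\mathbb{E}_\nu[X]\rvert \leq D_{W_1}(\rho,\nu)$; then monotonicity of the $p$-Wasserstein distances in $p$ (Jensen's inequality applied to the optimal coupling, $\mathbb{E}_\gamma\lvert X-Y\rvert \le (\mathbb{E}_\gamma\lvert X-Y\rvert^2)^{1/2}$) yields $D_{W_1}(\rho,\nu)\leq D_{W_2}(\rho,\nu)$.

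With this in hand, apply the triangle inequality
\[
    \lvert \mu_a - \mu_{\omega(a)}\rvert \;\leq\; \lvert \mu_a - m_a\rvert + \lvert m_a - \mu_{\omega(a)}\rvert.
\]
The first term $\lvert \mu_a - m_a\rvert$ is precisely the gap between the means of $P(Y\mid a)$ and its push-forward $\tau(P(Y\mid a))$, so the auxiliary inequality bounds it by $D_{W_2}\bigl(P(Y\mid a),\tau(P(Y\mid a))\bigr)$, which is at most $s(\abs)$ since the RD error is a maximum over $\mathcal{A}$. The second term $\lvert m_a - \mu_{\omega(a)}\rvert$ is the gap between the means of $\tau(P(Y\mid a))$ and $P(Y'\mid\omega(a))$, hence at most $D_{W_2}\bigl(\tau(P(Y\mid a)),P(Y'\mid\omega(a))\bigr)\leq e(\abs)$ by definition of the IC error. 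Summing the two bounds gives $s(\abs)+e(\abs)$, as required.

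I expect the only delicate point to be the auxiliary mean-gap inequality together with the bookkeeping that matches each Wasserstein term to its corresponding abstraction measure; the remainder is the one-line triangle inequality. In particular I should confirm that the reward distributions have finite second moments, so that both the means and $D_{W_2}$ are well defined (guaranteed here by the $1$-subgaussianity assumption on the rewards), and that $m_a$ is finite, which follows from the same integrability and the measurability of $\tau$ restricted to the reward coordinate.
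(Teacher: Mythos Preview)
The paper does not supply its own proof of this lemma; it is simply quoted from \citet{zennaro2024causally} (their Proposition~4.1) and used as a black box thereafter, so there is no in-paper argument to compare against.

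Your proof is correct and is the natural one. The auxiliary step $\lvert \mathbb{E}_\rho X - \mathbb{E}_\nu X\rvert \le D_{W_1}(\rho,\nu) \le D_{W_2}(\rho,\nu)$ is standard (Kantorovich--Rubinstein with the $1$-Lipschitz test $f(x)=x$, then Jensen on the optimal coupling), and inserting the intermediate mean $m_a = \mathbb{E}[\tau(Y_a)]$ splits the gap into exactly the two Wasserstein distances that $s(\abs)$ and $e(\abs)$ majorise by definition. Your caveat about finite second moments is well placed; note that it is in fact already implicit in the hypothesis that $s(\abs)$ and $e(\abs)$ are finite, since a finite $D_{W_2}$ forces both arguments to lie in the Wasserstein-2 space, so $\tau(P(Y\mid a))$ automatically has finite second moment without any separate appeal to properties of $\tau$.
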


\Cref{lem:means_bounded_absrew} bounds the absolute difference between the expected reward associated with intervening with an action $a \in \mathcal{A}$ in the base bandit $\mathcal{B}$ and the expected reward associated with intervening with the corresponding abstracted action $\omega(a)$ in  $\mathcal{B}'$. That is, \Cref{lem:means_bounded_absrew} bounds how much the expected reward associated with a base action changes after abstracting.
The next lemma bounds the optimality gap of an action in $\mathcal{B}$ in terms of the IC and RD error of $\abs$ and the optimality gap of the corresponding action in $\mathcal{B}'$.
\begin{lemma}\label{lem:opt_gaps_base_abs}
    Given a CAMAB $\mathcal{C}$, for any $a_k \in \mathcal{A}$ and approximate abstraction $\abs  = \langle \tau,\omega \rangle$, 
    \begin{equation}
        \Delta_{a_k} \leq \Delta'_{\omega(a_k)} + 2(s(\abs)+e(\abs)) - \Delta_{\omega(a_1)}'.
    \end{equation}
\end{lemma}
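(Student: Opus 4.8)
The plan is to unfold both optimality gaps in terms of mean rewards, observe that the abstracted-model optimum $\mu_{a'_1}$ cancels, and then close the gap with two applications of \Cref{lem:means_bounded_absrew}. Concretely, writing $\Delta_{a_k} = \mu_{a_1} - \mu_{a_k}$ and using $\Delta'_{a'} = \mu_{a'_1} - \mu_{a'}$, the right-hand side of the claimed inequality equals
\begin{equation}
    (\mu_{a'_1} - \mu_{\omega(a_k)}) + 2(s(\abs)+e(\abs)) - (\mu_{a'_1} - \mu_{\omega(a_1)}) = \mu_{\omega(a_1)} - \mu_{\omega(a_k)} + 2(s(\abs)+e(\abs)),
\end{equation}
so the statement is equivalent to
\begin{equation}
    \mu_{a_1} - \mu_{a_k} \leq \mu_{\omega(a_1)} - \mu_{\omega(a_k)} + 2(s(\abs)+e(\abs)),
\end{equation}
i.e., after rearranging, to $(\mu_{a_1} - \mu_{\omega(a_1)}) + (\mu_{\omega(a_k)} - \mu_{a_k}) \leq 2(s(\abs)+e(\abs))$.

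Next I would bound each of the two bracketed differences separately. By \Cref{lem:means_bounded_absrew} applied to $a_1$ we have $\mu_{a_1} - \mu_{\omega(a_1)} \leq \vert \mu_{a_1} - \mu_{\omega(a_1)}\vert \leq s(\abs)+e(\abs)$, and applied to $a_k$ we have $\mu_{\omega(a_k)} - \mu_{a_k} \leq \vert \mu_{a_k} - \mu_{\omega(a_k)}\vert \leq s(\abs)+e(\abs)$. Adding these two inequalities yields the displayed bound, and retracing the algebra above recovers the lemma.

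The only point requiring a little care is the cancellation of $\mu_{a'_1}$: the inclusion of the $-\Delta'_{\omega(a_1)}$ term in the statement is precisely what makes the claim independent of the abstracted model's own optimal arm, reducing everything to a comparison between the base means $\mu_{a_1}, \mu_{a_k}$ and the means $\mu_{\omega(a_1)}, \mu_{\omega(a_k)}$ of the images of the \emph{base} arms. Once that is noticed, there is no real obstacle — the result is an immediate two-line consequence of \Cref{lem:means_bounded_absrew}, with no subgaussianity, concentration, or properties of $D_{W_2}$ needed beyond what is already packaged into that lemma.
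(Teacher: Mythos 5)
Your proposal is correct and follows essentially the same route as the paper: both arguments hinge on the cancellation of $\mu_{a'_1}$ in $\Delta'_{\omega(a_k)} - \Delta'_{\omega(a_1)}$ and then close the gap with two applications of \Cref{lem:means_bounded_absrew}. Your rearrangement is, if anything, slightly cleaner than the paper's add-and-subtract chain, but the underlying argument is identical.
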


\begin{proof}[Proof of \Cref{lem:opt_gaps_base_abs}]
For any $a_k \in \mathcal{A}$,
    \begin{flalign*}
        &&\Delta_{a_k}&= \mu_{a_1} - \mu_{a_k}&&\\
        && &=    \mu_{a_1} + \mu_{a_1'} - \mu_{a_1'} - \mu_{\omega(a_1)} + \mu_{\omega(a_1)}  - \mu_{a_k} + \mu_{\omega(a_k)} - \mu_{\omega(a_k)} &&\\
        && &\leq    \vert{\mu_{a_1} - \mu_{\omega(a_1)}  - \mu_{a_k} + \mu_{\omega(a_k)}\vert} + \mu_{a_1'} - \mu_{a_1'} + \mu_{\omega(a_1)} - \mu_{\omega(a_k)} &&\\
        && &=    \vert{\mu_{a_1} - \mu_{\omega(a_1)}  - \mu_{a_k} + \mu_{\omega(a_k)}\vert} + \Delta'_{\omega(a_k)} - \Delta'_{\omega(a_1)}&&\\
        && &\leq \vert{\mu_{a_1} - \mu_{\omega(a_1)}}\vert + \vert{\mu_{a_k} - \mu_{\omega(a_k)}}\vert + \Delta'_{\omega(a_k)} - \Delta'_{\omega(a_1)}&&\text{(triangle inequality)}\\
        && &\leq 2(s(\abs)+e(\abs)) + \Delta'_{\omega(a_k)} - \Delta'_{\omega(a_1)}. &&\text{(\Cref{lem:means_bounded_absrew})}
    \end{flalign*}
\end{proof}

We now prove the bound on the mean of the abstract arm corresponding to the optimal arm in the base bandit instance, provided in \Cref{prop:bound_absoptbase}:

\setcounter{prop}{0}
\begin{prop}
Given a CAMAB $\mathcal{C}$, let $\abs = \langle \tau, \omega \rangle$ be an approximate abstraction with IC error $s(\abs)$ and RD error $e(\abs)$, and define $\epsilon(\abs) = 2\left(s(\abs)+e(\abs)\right)$. Then it holds that:
    \begin{equation}
        \mu_{a'_1} - \epsilon(\abs) \leq \mu_{\omega(a_1)} \leq \mu_{a'_1}.
    \end{equation}
\end{prop}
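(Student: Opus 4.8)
\textbf{Proof proposal for \Cref{prop:bound_absoptbase}.}

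The plan is to prove the two inequalities separately, using \Cref{lem:means_bounded_absrew} as the workhorse together with the optimality of $a_1$ in $\mathcal{B}$ and of $a_1'$ in $\mathcal{B}'$. For the upper bound $\mu_{\omega(a_1)} \leq \mu_{a_1'}$, the observation is immediate: $a_1'$ is by definition the optimal arm in the abstracted bandit $\mathcal{B}'$, so its mean dominates that of \emph{every} abstract arm, and $\omega(a_1)$ is one such arm. Hence $\mu_{\omega(a_1)} \leq \mu_{a_1'}$ with no appeal to abstraction error at all.

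For the lower bound $\mu_{a_1'} - \epsilon(\abs) \leq \mu_{\omega(a_1)}$, I would chain two applications of \Cref{lem:means_bounded_absrew} through the optimal arms. First, let $a$ be the base arm satisfying $\omega(a) = a_1'$ with the largest mean (or note that $a_1' \in \omega(\mathcal{A})$ if $\omega$ is assumed surjective — otherwise one works with the preimage of $a_1'$, or more carefully picks the base arm whose image is the abstract-optimal arm); then $\mu_{a_1'} = \mu_{\omega(a)} \leq \mu_a + s(\abs) + e(\abs)$ by the lemma, and $\mu_a \leq \mu_{a_1}$ by optimality of $a_1$ in $\mathcal{B}$, giving $\mu_{a_1'} \leq \mu_{a_1} + s(\abs) + e(\abs)$. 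Second, apply the lemma again to $a_1$ itself: $\mu_{a_1} \leq \mu_{\omega(a_1)} + s(\abs) + e(\abs)$. Combining, $\mu_{a_1'} \leq \mu_{\omega(a_1)} + 2(s(\abs)+e(\abs)) = \mu_{\omega(a_1)} + \epsilon(\abs)$, which rearranges to the claimed lower bound. Alternatively, and perhaps more cleanly, one can read the lower bound directly off \Cref{lem:opt_gaps_base_abs} applied with $a_k$ chosen so that $\omega(a_k) = a_1'$: this yields a bound on $\Delta_{a_k}$ which, unwound, relates $\mu_{a_1'}$ and $\mu_{\omega(a_1)}$; but the two-step chaining above is the most transparent route.

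The only genuine subtlety — and the step I would flag as the main obstacle — is whether $a_1'$ lies in the image of $\omega$, i.e. whether there is a base arm mapping onto the abstract-optimal arm. If $\omega$ is not surjective, the first half of the lower-bound argument needs adjustment: one cannot write $\mu_{a_1'} = \mu_{\omega(a)}$ for any base $a$. In that case I would instead invoke the definition of $a_1'$ as the maximizer over $\mathcal{A}'$, note that in particular $\mu_{a_1'} \geq \mu_{\omega(a_1)}$ (which is exactly the already-proven upper bound), and obtain the lower bound from a direct argument: if $\mu_{a_1'} - \mu_{\omega(a_1)} > \epsilon(\abs)$ then, pulling $a_1'$ back to \emph{some} base arm via an IC/RD comparison would contradict the optimality of $a_1$ — but this requires the preimage to be nonempty, so realistically the proposition implicitly assumes $\omega$ surjective (or at least $a_1' \in \omega(\mathcal{A})$), and the clean proof is the two-line chaining above. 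I would state that assumption explicitly if it is not already in force, then present the short chain of inequalities.
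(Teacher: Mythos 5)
Your proof is correct and is essentially the paper's argument: the upper bound is obtained identically from the optimality of $a_1'$ over all abstract arms, and your two applications of \Cref{lem:means_bounded_absrew} (once to a base preimage of $a_1'$ together with the optimality of $a_1$, once to $a_1$ itself) are exactly what the paper's route through \Cref{lem:opt_gaps_base_abs} unwinds to. The nonemptiness of $\omega^{-1}(a_1')$ that you rightly flag is implicitly assumed in the paper as well, since its proof likewise instantiates \Cref{lem:opt_gaps_base_abs} at some $a_k \in \omega^{-1}(a_1')$.
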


\begin{proof}[Proof of \Cref{prop:bound_absoptbase}]
    To prove the right side of the inequality, $\mu_{\omega(a_1)} \leq \mu_{a'_1}$, it is sufficient to recall the definition of the optimal abstract arm $a_1'$, according to which $\mu_{a_1'} \geq \mu_{a_k}$ for all $a_k \in \mathcal{A}'$, including $\omega(a_1)$.

    To prove the left side of the inequality, $\mu_{a'_1} - \epsilon(\abs) \leq \mu_{\omega(a_1)}$, we apply \Cref{lem:opt_gaps_base_abs} with $a_k \in \omega^{-1}(a_1')$
    \begin{flalign*}
        &&  \Delta_{a_k}&\leq \Delta'_{\omega({a_k})} + 2(s(\abs)+e(\abs)) - \Delta_{\omega(a_1)}'&&\text{(\Cref{lem:opt_gaps_base_abs})}\\
        &&  &\leq \Delta'_{a'_1} + 2(s(\abs)+e(\abs)) - \Delta_{\omega(a_1)}'&&\text{(mapping $a_k$ to the optimal $a'_1$)}\\
        && &\leq 2(s(\abs)+e(\abs)) - \Delta_{\omega(a_1)}'&&\text{(optimality gap for $a'_1$ is zero)}\\
        && & =  2(s(\abs)+e(\abs)) -\mu_{a_1'} + \mu_{\omega(a_1)}. &&\text{(definition of optimality gap)}
    \end{flalign*}   
    Rearranging gives
    \begin{flalign*}
        &&\mu_{\omega(a_1)}&\geq \mu_{a_1'} - 2(s(\abs)+e(\abs)) + \Delta_{a_k}&&\\
        && &\geq \mu_{a_1'} - 2(s(\abs)+e(\abs)), &&\text{($\Delta_{a_k} \geq 0$)}\\
        && &\geq \mu_{a_1'} - \epsilon(\abs), &&\text{(definition of $\epsilon(\abs)$)}
    \end{flalign*}
    which proves the left side of the inequality.
\end{proof}

Next, we state and prove an auxiliary lemma, used in the proof  \Cref{prop:at-ucb_ecr}. In short, \Cref{lem:bound_sum_exps} shows that \Cref{ass:abs_hor_len} ensures  AT-UCB obtains enough abstract samples to correctly filter abstract actions with  probability sublinear in $n$.
\begin{lemma}\label{lem:bound_sum_exps}
    Under \Cref{ass:abs_hor_len},
    \begin{equation}
        \label{eq:lem3statement}
        \sum_{a' \in \mathcal{A}'} \exp\left(-\frac{n'}{4k'}\left(\epsilon(\abs) + \Delta_{a'}' - \Delta'_{\omega(a_1)}\right)^2\right) + \sum_{a' \in \mathcal{D}} \exp\left(-\frac{n'}{4k'}\left(\Delta'_{a'} - \epsilon(\abs)\right)^2\right)
        \leq \frac{2}{n}.
    \end{equation}
\end{lemma}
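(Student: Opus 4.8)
The plan is to bound each of the two sums in \eqref{eq:lem3statement} separately by $1/n$, exploiting the fact that \Cref{ass:abs_hor_len} makes the coefficient $n'/(4k')$ large enough that every exponential in each sum is at most $\exp(-\log(nk')) = 1/(nk')$, after which summing over the (at most $k'$) relevant abstract arms yields the bound $1/n$ for each sum and $2/n$ overall. So the real content is a termwise estimate: I must show that under \Cref{ass:abs_hor_len} the exponent of each summand is at least $\log(nk')$ in absolute value, i.e. that the quantity being squared is bounded below (in magnitude) by $2\sqrt{k'\log(nk')/n'}$, which by the assumption is $\le \min\{|\epsilon(\abs) - \Delta'_{\omega(a_1)}|,\ |\min_{a'\in\mathcal{D}}\Delta'_{a'} - \epsilon(\abs)|\}$.

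For the \textbf{first sum}, over all $a' \in \mathcal{A}'$, the squared quantity is $\epsilon(\abs) + \Delta'_{a'} - \Delta'_{\omega(a_1)}$. Since $\Delta'_{a'} \geq 0$ for every abstract arm, this is at least $\epsilon(\abs) - \Delta'_{\omega(a_1)}$; and by \Cref{prop:bound_absoptbase} we have $\mu_{a'_1} - \mu_{\omega(a_1)} = \Delta'_{\omega(a_1)} \le \epsilon(\abs)$, so $\epsilon(\abs) - \Delta'_{\omega(a_1)} \ge 0$ and hence $\epsilon(\abs) + \Delta'_{a'} - \Delta'_{\omega(a_1)} \ge \epsilon(\abs) - \Delta'_{\omega(a_1)} = |\epsilon(\abs) - \Delta'_{\omega(a_1)}|$. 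Squaring and invoking the first branch of the max in \Cref{ass:abs_hor_len} gives $\frac{n'}{4k'}(\epsilon(\abs)+\Delta'_{a'}-\Delta'_{\omega(a_1)})^2 \ge \log(nk')$, so each term is $\le 1/(nk')$ and the sum over at most $k'$ arms is $\le 1/n$. For the \textbf{second sum}, over $a' \in \mathcal{D}$, by definition of $\mathcal{D}$ we have $\mu_{a'} < \mu_{a'_1} - \epsilon(\abs)$, i.e. $\Delta'_{a'} > \epsilon(\abs)$, so $\Delta'_{a'} - \epsilon(\abs) > 0$ and $\Delta'_{a'} - \epsilon(\abs) \ge \min_{\tilde a'\in\mathcal{D}}\Delta'_{\tilde a'} - \epsilon(\abs) = |\min_{a'\in\mathcal{D}}\Delta'_{a'}-\epsilon(\abs)| > 0$. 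Squaring and using the second branch of the max in \Cref{ass:abs_hor_len} again yields exponent $\ge \log(nk')$, each term $\le 1/(nk')$, and the sum over $|\mathcal{D}| \le k'$ arms is $\le 1/n$. Adding the two bounds completes the proof.

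The \textbf{main obstacle} is ensuring the signs work out so that the quantities being squared genuinely have the claimed lower bounds — in particular that $\epsilon(\abs) - \Delta'_{\omega(a_1)} \ge 0$ (needed so that adding the nonnegative $\Delta'_{a'}$ only helps, and so that the assumption's $(\epsilon(\abs)-\Delta'_{\omega(a_1)})^{-2}$ is a meaningful positive quantity), which is exactly where \Cref{prop:bound_absoptbase} is used, and that $\min_{a'\in\mathcal{D}}\Delta'_{a'} - \epsilon(\abs) > 0$, which is immediate from the definition of $\mathcal{D}$. Everything else is the routine monotonicity-of-$\exp$ and counting-terms argument sketched above; no concentration inequality is needed here, since \Cref{lem:bound_sum_exps} is a purely deterministic statement about the exponents.
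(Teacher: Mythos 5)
Your proof is correct and follows essentially the same route as the paper's: both reduce \Cref{ass:abs_hor_len} to the termwise bound $\exp(-\cdot)\le 1/(nk')$ via the monotonicity arguments $\epsilon(\abs)+\Delta'_{a'}-\Delta'_{\omega(a_1)}\ge\epsilon(\abs)-\Delta'_{\omega(a_1)}\ge 0$ (from \Cref{prop:bound_absoptbase}) and $\Delta'_{a'}-\epsilon(\abs)\ge\min_{\tilde a'\in\mathcal{D}}\Delta'_{\tilde a'}-\epsilon(\abs)>0$ (from the definition of $\mathcal{D}$), then count at most $k'$ terms in each sum. The only cosmetic difference is that the paper first forms a common $k'\cdot\max\{\cdot,\cdot\}$ bound before splitting into the two sums, whereas you bound each sum directly; the content is identical.
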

\begin{proof}
    By \Cref{ass:abs_hor_len},
    \begin{flalign}
        \frac{n'}{4k'} \min\left\{ \left(\epsilon(\abs) - \Delta'_{\omega(a_1)}\right)^2, \left(\min_{a' \in \mathcal{D}} \Delta'_{a'} - \epsilon(\abs)\right)^2 \right\} &\geq \log(nk').
    \end{flalign}
    Taking the exponent of both sides yields,
    \begin{flalign}
        \exp \left( \min\left\{ \frac{n'}{4k'} \left[\epsilon(\abs) - \Delta'_{\omega(a_1)}\right]^2, \frac{n'}{4k'} \left[\min_{a' \in \mathcal{D}} \Delta'_{a'} - \epsilon(\abs)\right]^2 \right\} \right) &\geq nk'.
    \end{flalign}
Reciprocating both sides and reversing the inequality yields,       
    \begin{align}
        \frac{1}{n}&  \geq k' \max\left\{ \exp\left(-\frac{n'}{4k'}\left[\epsilon(\abs) - \Delta'_{\omega(a_1)}\right]^2\right), \exp\left(-\frac{n'}{4k'}\left[\min_{a' \in \mathcal{D}} \Delta'_{a'} - \epsilon(\abs)\right]^2\right) \right\}.
    \end{align}
    Recalling that $k' := |\mathcal{A}'|$ gives,
    \begin{align}
        \label{eq:lem3main}
        \frac{1}{n}&  \geq \sum_{a' \in \mathcal{A'}} \max\left\{ \exp\left(-\frac{n'}{4k'}\left[\epsilon(\abs) - \Delta'_{\omega(a_1)}\right]^2\right), \exp\left(-\frac{n'}{4k'}\left[\min_{a' \in \mathcal{D}} \Delta'_{a'} - \epsilon(\abs)\right]^2\right) \right\}.
    \end{align}
    Using \Cref{eq:lem3main}, we may bound both terms on the LHS of \Cref{eq:lem3statement}. Starting with the first term, since $\Delta_{a'}' \geq 0$ and $\epsilon(\abs) - \Delta'_{\omega(a_1)} \geq 0$ (by \Cref{prop:bound_absoptbase}), it follows that,
    \begin{equation}
        \label{eq:lem3p1}
        \exp\left(-\frac{n'}{4k'}\left[\epsilon(\abs) - \Delta'_{\omega(a_1)}\right]^2\right) \geq \exp\left(-\frac{n'}{4k'}\left[\epsilon(\abs) + \Delta'_{a'} - \Delta'_{\omega(a_1)}\right]^2\right).
    \end{equation}
    Combining \Cref{eq:lem3main} with \Cref{eq:lem3p1} yields,
    \begin{equation}\label{eq:1on-a'}
        \frac{1}{n} \geq \sum_{a'\in\mathcal{A}'} \exp\left(-\frac{n'}{4k'}\left[\epsilon(\abs) + \Delta'_{a'} - \Delta'_{\omega(a_1)}\right]^2\right).
    \end{equation}
    Moving onto the second term, since $\min_{a' \in \mathcal{D}} \Delta'_{a'} - \epsilon(\abs) > 0$ by definition of $\mathcal{D}$, it follows that,
    \begin{equation}
        \label{eq:lem3p2}
        \exp\left(-\frac{n'}{4k'}\left[\min_{a' \in \mathcal{D}} \Delta'_{a'} - \epsilon(\abs)\right]^2\right) \geq \exp\left(-\frac{n'}{4k'}\left[\Delta'_{a'} - \epsilon(\abs)\right]^2\right)\qquad \forall a' \in \mathcal{D}.
    \end{equation}
    Combining \Cref{eq:lem3main} with \Cref{eq:lem3p2} yields,
    \begin{equation}\label{eq:1on-d}
        \frac{1}{n} \geq \sum_{a'\in\mathcal{A}'} \exp\left(-\frac{n'}{4k'}\left[\min_{\tilde{a}' \in \mathcal{D}}\Delta'_{\tilde{a}'} - \epsilon(\abs)\right]^2\right) \geq \sum_{a'\in\mathcal{D}} \exp\left(-\frac{n'}{4k'}\left[\Delta'_{a'} - \epsilon(\abs)\right]^2\right).
    \end{equation}
    Finally, adding \Cref{eq:1on-a'} and \Cref{eq:1on-d} gives the result.
\end{proof}

We now prove our main result, \Cref{prop:at-ucb_ecr}:
\begin{prop}\label{prop:at-ucb_ecr}
    The expected cumulative regret $\mathcal{R}_{n',n}(\abs)$ for \Cref{alg:at-ucb} has the upper bound
    \begin{equation}
    \mathcal{R}_{n}(\mathcal{C}) \leq n' \cdot C + \left(3\sum_{a \in \omega^{-1}(\mathcal{D}^c)} \Delta_{a} + \sum_{\substack{a \in \omega^{-1}(\mathcal{D}^c),\\\Delta_a > 0}} \frac{16 \log(n)}{\Delta_a}\right) + 2 \max_{a\in \mathcal{A}} \Delta_a.
\end{equation}
\end{prop}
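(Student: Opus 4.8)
Since $\mathcal{R}_n(\mathcal{C}) = n'\cdot C + \mathbb{E}\big[\sum_{t=1}^n \Delta_{a^{(t)}}\big]$ and the term $n'\cdot C$ is exactly the abstract-phase cost incurred by \Cref{alg:at-ucb}, it suffices to bound the base-phase regret $\mathbb{E}\big[\sum_{t=1}^n \Delta_{a^{(t)}}\big]$. The plan is to condition on the ``good filtering'' event
\[
E := \big\{\, \mathcal{D} \subseteq \hat{\mathcal{D}} \ \text{ and }\ \omega(a_1) \notin \hat{\mathcal{D}} \,\big\}.
\]
On $E$ two facts hold: (i) $a_1 \in \hat{\mathcal{A}} = \omega^{-1}(\hat{\mathcal{D}}^c)$, because $\omega(a_1)\in\hat{\mathcal{D}}^c$, so the optimal base arm is never eliminated; and (ii) $\hat{\mathcal{A}} \subseteq \omega^{-1}(\mathcal{D}^c)$, because $\mathcal{D}\subseteq\hat{\mathcal{D}}$ gives $\hat{\mathcal{D}}^c\subseteq\mathcal{D}^c$. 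Splitting $\mathbb{E}\big[\sum_t \Delta_{a^{(t)}}\big] = \mathbb{E}\big[\sum_t \Delta_{a^{(t)}}\mathbf{1}[E]\big] + \mathbb{E}\big[\sum_t \Delta_{a^{(t)}}\mathbf{1}[E^c]\big]$, I will bound the first term by the UCB regret on the favourable restricted action set and the second by the trivial bound $n\max_{a}\Delta_a$ times $\Pr(E^c)$.

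For the first term, the crucial point is that $\hat{\mathcal{A}}$ depends only on the abstract exploration phase, which is independent of the base rewards. Conditioning on that phase, on $E$ the UCB subroutine runs on a fixed set $\hat{\mathcal{A}}$ containing the global optimum $a_1$ (hence also the optimum within $\hat{\mathcal{A}}$), so the standard instance-dependent UCB regret bound \citep{lattimore2020bandit} applies and gives at most $3\sum_{a\in\hat{\mathcal{A}},\,\Delta_a>0}\Delta_a + \sum_{a\in\hat{\mathcal{A}},\,\Delta_a>0}16\log(n)/\Delta_a$; by (ii) this is at most $3\sum_{a\in\omega^{-1}(\mathcal{D}^c)}\Delta_a + \sum_{a\in\omega^{-1}(\mathcal{D}^c),\,\Delta_a>0}16\log(n)/\Delta_a$, uniformly over the realisation of the abstract phase inside $E$. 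Taking expectations, the first term is bounded by this same quantity; the second term is at most $n\max_{a\in\mathcal{A}}\Delta_a\cdot\Pr(E^c)$.

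It remains to show $\Pr(E^c)\le 2/n$, which is where \Cref{ass:abs_hor_len} enters, via \Cref{lem:bound_sum_exps}. Write $E^c = \{\omega(a_1)\in\hat{\mathcal{D}}\}\cup\{\mathcal{D}\not\subseteq\hat{\mathcal{D}}\}$ and bound each event. Each abstract arm is pulled $n'/k'$ times, so $\hat\mu_{a'}-\mu_{a'}$ is $(k'/n')$-subgaussian and, for distinct $a',\tilde a'$, $\hat\mu_{a'}-\hat\mu_{\tilde a'}$ concentrates around $\mu_{a'}-\mu_{\tilde a'}$ with variance proxy $2k'/n'$. With $\varepsilon=\epsilon(\abs)$, a union bound gives $\{\omega(a_1)\in\hat{\mathcal{D}}\}\subseteq\bigcup_{a'\in\mathcal{A}'}\{\hat\mu_{a'}-\hat\mu_{\omega(a_1)}>\epsilon(\abs)\}$; applying a Chernoff bound to each term, rewriting $\mu_{a'}-\mu_{\omega(a_1)}=\Delta'_{\omega(a_1)}-\Delta'_{a'}$, and using $\epsilon(\abs)-\Delta'_{\omega(a_1)}\ge0$ from \Cref{prop:bound_absoptbase} to keep the deviations nonnegative, reproduces exactly the first sum in \Cref{lem:bound_sum_exps}. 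Similarly, for $a'\in\mathcal{D}$ the event $a'\notin\hat{\mathcal{D}}$ forces $\hat\mu_{a_1'}-\hat\mu_{a'}\le\epsilon(\abs)$, and since $\Delta'_{a'}-\epsilon(\abs)>0$ on $\mathcal{D}$ a Chernoff bound yields $\exp(-\tfrac{n'}{4k'}(\Delta'_{a'}-\epsilon(\abs))^2)$; summing over $\mathcal{D}$ gives the second sum. Invoking \Cref{lem:bound_sum_exps} gives $\Pr(E^c)\le 2/n$, so the second term above is at most $2\max_{a\in\mathcal{A}}\Delta_a$; adding the $n'\cdot C$ term completes the proof. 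I expect the main obstacle to be the probabilistic bookkeeping for $\Pr(E^c)$: tracking the subgaussian variance proxies, handling the maximum defining $\hat{\mathcal{D}}$ through union bounds, and checking that the two failure modes align precisely with the two sums prepackaged in \Cref{lem:bound_sum_exps}; the UCB-on-a-random-subset argument is comparatively routine once independence of the two phases is noted.
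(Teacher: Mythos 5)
Your proposal is correct and follows essentially the same route as the paper's proof: the same decomposition of the regret around the good filtering event ($E$ coincides exactly with the paper's event $G$, since $\omega(a_1)\notin\hat{\mathcal{D}}$ and $\mathcal{D}\subseteq\hat{\mathcal{D}}$ unpack to precisely the empirical-mean conditions defining $G_1$ and the $G_{a'}$), the same UCB-on-the-restricted-set bound on the good event, and the same union-bound/subgaussian concentration argument feeding into \Cref{lem:bound_sum_exps} to get $\mathbb{P}(E^c)\le 2/n$. Your explicit remark that $\hat{\mathcal{A}}$ is determined by the abstract phase, which is independent of the base rewards, is a slightly more careful justification of the step the paper states as "clear from previous discussion," but it does not change the argument.
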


\begin{proof}[Proof of \Cref{prop:at-ucb_ecr}]
    The expected cumulative regret is given by,
    \begin{align}
        \mathcal{R}_{n}(\mathcal{C}) &= n'\cdot C + \mathbb{E}\left[\sum_{t=1}^n\left(\mu_{a_1} - Y_{a^{(t)}}\right)\right].
    \end{align}
    We focus on bounding the cumulative regret incurred by AT-UCB during the UCB phase of AT-UCB, which is represented by the second term. With this goal in mind, consider the event,
    \begin{equation*}
        G = \left\{\hat{\mu}_{\omega(a_1)} \geq \max_{a' \in \mathcal{A}'} \hat{\mu}_{a'} - \epsilon(\abs)\right\} 
        \cap 
        \left(
            \bigcap\limits_{a' \in \mathcal{D}}\left\{ \hat{\mu}_{a'} < \max_{\tilde{a}' \in \mathcal{A}'}\hat{\mu}_{\tilde{a}'} - \epsilon(\abs) \right\}
        \right) 
        =: G_1 \cap \left( \bigcap_{a' \in \mathcal{D}} G_{a'}\right).
    \end{equation*}
    Note that $G_{1}$ corresponds to the event wherein the empirical mean reward $\hat{\mu}_{\omega(a_1)}$ associated with $\omega(a_{1})$ (the intervention to which the optimal action $a_{1}$ in $\mathcal{B}$ maps), is \emph{larger} than the highest observed empirical mean reward $\hat{\mu}_{a'}$ minus the threshold hyperparameter $\epsilon(\abs)$. In other words, when $G_{1}$ occurs $\omega(a_{1})$ is not included in $\hat{\mathcal{D}}$ and thus action $a_{1}$ is available when UCB is ran in AT-UCB.

    Meanwhile, $G_{a'}$ corresponds to the event where the empirical mean reward $\hat{\mu}_{a'}$ of action $a'$ is \emph{smaller} than the highest observed empirical mean reward $\hat{\mu}_{\tilde{a}'}$ minus the threshold hyperparameter $\epsilon(\abs)$. In other words, when $G_{a'}$ occurs action $a'$ is included $\hat{\mathcal{D}}$, and thus any base action $a \in \omega^{-1}(a')$ mapping to $a{'}$ is not considered once UCB is ran. When the events $G_{a'}$ for all $a' \in \mathcal{D}$ simultaneously occur, $\mathcal{D} \subseteq \hat{\mathcal{D}}$. That is, all actions $a' \in \mathcal{D}$ are filtered out by AT-UCB and any corresponding base actions $a \in \omega^{-1}(\mathcal{D})$ are not available when UCB is ran.

    In summary, the event $G_{1}$ ensures that $\mu_{\omega(a_{1})}$ is not \emph{relatively underestimated} so that the optimal base arm $a_{1}$ is not filtered out and available when UCB is ran whilst $\bigcap_{a'\in \mathcal{D}}G_{a'}$ ensures that all abstract arms in $\mathcal{D}$ are not \emph{relatively overestimated} so that all base arms corresponding to an abstract arm $a' \in \mathcal{D}$ are filtered out before UCB is ran. Thus when the event $G$ occurs $\mathcal{D} \subseteq \hat{\mathcal{D}}$ and $\omega(a_{1}) \notin \hat{\mathcal{D}}$. That is, the optimal base arm $a_{1}$ is not filtered out and all base arms in $\omega^{-1}(\mathcal{D})$ are.
    
    
    Using the law of total probability, we may re-express cumulative regret incurred during the UCB phase of AT-UCB by conditioning on the event $G$:
    \begin{align}
        \mathbb{E}\left[\sum_{t=1}^n\left(\mu_{a_1} - Y_{a^{(t)}}\right)\right] &= \mathbb{E}\left[\sum_{t=1}^n\left(\mu_{a_1} - Y_{a^{(t)}}\right) \Bigg\vert G \right] \cdot \mathbb{P}(G) + \mathbb{E}\left[\sum_{t=1}^n\left(\mu_{a_1} -Y_{a^{(t)}}\right) \Bigg\vert G^c \right] \cdot \mathbb{P}(G^c)\\
        &\leq \mathbb{E}\left[\sum_{t=1}^n\left(\mu_{a_1} - Y_{a^{(t)}}\right) \Bigg\vert G \right] \cdot 1 + n \max_{a\in\mathcal{A}} \Delta_{a} \cdot \mathbb{P}(G^c),
    \end{align}
    where the inequality is justified by that fact that $\mathbb{P}(G) \leq 1$ and the fact that the expected regret on each time step is upper bounded by $\max_{a\in \mathcal{A}}\Delta_{a}$.
    

    It is clear from previous discussion that the cumulative regret conditioned on the event $G$ is upper-bounded by the cumulative regret of UCB applied to the arm set $\omega^{-1}\left(\mathcal{D}^c\right) = \cup_{a' \in \mathcal{D}^c} \omega^{-1}(a')$. Substituting the standard regret bound for UCB into the previous inequality yields  
    \begin{align}
        \label{eq:almost}
        \mathbb{E}\left[\sum_{t=1}^n\left(\mu_{a_1} - Y_{a^{(t)}}\right)\right] &\leq \left(3\sum_{a \in \omega^{-1}(\mathcal{D}^c)} \Delta_{a} + \sum_{\substack{a \in \omega^{-1}(\mathcal{D}^c),\\\Delta_a > 0}} \frac{16 \log(n)}{\Delta_a}\right) + n \max_{a\in\mathcal{A}} \Delta_{a} \cdot \mathbb{P}(G^c).
    \end{align}
    
    All that remains is to upper bound $\mathbb{P}(G^c)$. From the union bound it follows that:
    \begin{equation}
        \label{eq:union_bound}
        \mathbb{P}(G^c) \leq \mathbb{P}(G^c_1) + \sum_{a' \in \mathcal{D}}\mathbb{P}(G_{a'}^c).
    \end{equation}
    We will proceed by bounding $\mathbb{P}(G_{1}^{c})$ and $\mathbb{P}(G^{c}_{a'})$ for arbitrary $a' \in \mathcal{D}$. For $\mathbb{P}(G^{c}_{1})$, note that
    \begin{flalign}
        && \mathbb{P}(G^c_{1}) &= \mathbb{P}(\hat{\mu}_{\omega(a_1)} < \max_{a' \in \mathcal{A}'} \hat{\mu}_{a'} - \epsilon(\abs)) &&\\
        && &\leq \mathbb{P}\left(\hat{\mu}_{\omega(a_1)} \leq \max_{a' \in \mathcal{A}'} \hat{\mu}_{a'} - \epsilon(\abs)\right)&&\\
        && &\overset{(a)}{\leq} \sum_{a' \in \mathcal{A}'} \mathbb{P}\left(\hat{\mu}_{\omega(a_1)} \leq \hat{\mu}_{a'} - \epsilon(\abs)\right)&&\\
        && &= \sum_{a' \in \mathcal{A}'} \mathbb{P}\left(\epsilon(\abs) - \mu_{a'} + \mu_{\omega(a_1)} \leq \hat{\mu}_{a'} - \mu_{a'} + \mu_{\omega(a_1)} - \hat{\mu}_{\omega(a_1)}\right)&&\\
        && &= \sum_{a' \in \mathcal{A}'} \mathbb{P}\left(\epsilon(\abs) + \Delta'_{a'} - \Delta'_{\omega(a_1)} \leq \hat{\mu}_{a'} - \mu_{a'} + \mu_{\omega(a_1)} - \hat{\mu}_{\omega(a_1)}\right)&&\\\label{eq:bound_g1}
        && &\overset{(b)}{\leq} \sum_{a' \in \mathcal{A}'} \exp\left(- \frac{n'}{4k'}\left[ \epsilon(\abs) + \Delta'_{a'} - \Delta'_{\omega(a_1)} \right]^2\right),&&
    \end{flalign}
    where (a) follows from the union bound. Meanwhile, step (b) follows from the fact that $\hat{\mu}_{a'} - \mu_{a'}$ and $\mu_{\omega(a_1)} - \hat{\mu}_{\omega(a_1)}$ are 1-subgaussian, which in turn implies that $\hat{\mu}_{a'} - \mu_{a'} + \mu_{\omega(a_1)} - \hat{\mu}_{\omega(a_1)}$ is $\sqrt{2k'/n'}$-subgaussian. Note that in order to apply the definition of subgaussian variables we must first ensure that $\epsilon(\abs) + \Delta'_{a'} - \Delta'_{\omega(a_1)}$ is nonnegative. However this follows from the below chain of inequalities, justified by the nonnegativity of $\Delta'_{a}$ for all $a' \in \mathcal{A}'$ and \Cref{prop:bound_absoptbase}:
    \begin{equation}
        \epsilon(\abs) + \Delta_{a'}' \geq \epsilon(\abs) \geq \Delta_{\omega(a_1)}'.
    \end{equation}
    Next, we bound $\mathbb{P}(G_{a'}^c)$ as follows:
    \begin{flalign}
        && \mathbb{P}(G_{a'}^c) &= \mathbb{P}\left(\hat{\mu}_{a'} \geq \max_{\tilde{a}'\in\mathcal{A}'}\hat{\mu}_{\tilde{a}'} - \epsilon(\abs)\right)&&\\
        && &\overset{(a)}{\leq} \mathbb{P}\left(\hat{\mu}_{a'} \geq \hat{\mu}_{a_1'} - \epsilon(\abs)\right)&&\\
        && &\leq \mathbb{P}\left(\hat{\mu}_{a'} - \mu_{a'} + \mu_{a_1'} - \hat{\mu}_{a_1'} \geq \Delta'_{a'} - \epsilon(\abs)\right)&&\\\label{eq:bound_g2}
        && &\overset{(b)}{\leq} \exp\left(-\frac{n'}{4k'}\left[\Delta'_{a'} - \epsilon(\abs)\right]^2\right)
    \end{flalign}
    where (a) follows from the fact that $\hat{\mu}_{a_1'} \leq \max_{a' \in \mathcal{A}'} \hat{\mu}_{a'}$ as implied by $a_1' \in \mathcal{A}'$. Similar to before, (b) follows by appealing to the definition of subgaussian random variables.
    Note that the positivity of $\Delta'_{a'} - \epsilon(\abs)$ follows from the  definition $\mathcal{D}$ which implies $\Delta'_{a'} > \epsilon(\abs)$ for all $a' \in \mathcal{D}$ .

    Substituting \Cref{eq:bound_g1} and \Cref{eq:bound_g2} into \Cref{eq:union_bound} and applying \Cref{lem:bound_sum_exps} in conjunction with \Cref{ass:abs_hor_len} yields,
    \begin{flalign}
        \label{eq:subs_union}
        \mathbb{P}(G^c) &\leq  \sum_{a' \in \mathcal{A}'} \exp\left(-\frac{n'}{4k'}\left(\epsilon + \Delta_{a'}' - \Delta'_{\omega(a_1)}\right)^2\right) + \sum_{a' \in \mathcal{D}} \exp\left(-\frac{n'}{4k'}\left(\Delta'_{a'} - \epsilon\right)^2\right) \leq \frac{2}{n} 
    \end{flalign}
    Finally, substituting \Cref{eq:subs_union} into \Cref{eq:almost} yields the desired regret bound:
    \begin{equation}
        \mathbb{E}\left[\sum_{t=1}^n\left(\mu_{a_1} - Y_{a^{(t)}}\right)\right] \leq \left(3\sum_{a \in \omega^{-1}(\mathcal{D}^c)} \Delta_{a} + \sum_{\substack{a \in \omega^{-1}(\mathcal{D}^c),\\\Delta_a > 0}} \frac{16 \log(n)}{\Delta_a}\right) + 2 \max_{a\in \mathcal{A}} \Delta_a.
    \end{equation}
\end{proof}

\section{Further experimental details}\label{app:exp}

\subsection{Base model definition}\label{app:sirs_comms}

In our experiment, we consider as our base model $\mathcal{M}$ a discrete population model of an epidemic. In particular, we consider $k$ communities labelled by $c = 1, \ldots, k$, within which the number of individuals who are susceptible to the virus, infected by the virus, and recovered from the virus at time $t \in [0,T]$ are denoted, respectively, by $S_{c,t}$, $I_{c,t}$, and $R_{c,t}$. Each of the $k$ communities consists of an equal number of $N$ individuals, and the simulation progresses in time steps of size $\Delta t = 0.5$ until time $T = 50$. Between times $t$ and $t + \Delta t$, the number of individuals in community $c$ that transition from a state of susceptibility to a state of infection is modelled as
\begin{equation}
    N_{c,S\to I,t} \sim \text{Binomial}\left(S_{c,t}, l_{c,t}\left[1 - 
    \exp\left\{-\beta_{c} \Delta t \cdot \frac{\sum_{c' = 1}^k I_{c',t}}{k N}\right\}\right]\right),
\end{equation}
where $\beta_c > 0$ is a community-dependent parameter governing the ease with which individuals within that community become infected, and $l_{c,t} \in \{0,1\}$ assumes the value $0$ if a lockdown is in force at time $t$ and is $1$ otherwise. 
Secondly, the number of individuals transitioning from a state of infection to a recovered state is modelled as
\begin{equation}
    N_{c,I\to R,t} \sim \text{Binomial}\left(I_{c,t}, 1 - \exp\left\{-\gamma_{c} \Delta t\right\}\right),
\end{equation}
where $\gamma_c > 0$ is another community-dependent parameter. Finally, the number of individuals transitioning from a recovered state back to a state of susceptibility is modelled as
\begin{equation}
    N_{c,R\to S,t}\sim \text{Binomial}\left(R_{c,t}, 1 - \exp\left\{-\zeta_{c} \Delta t\right\}\right),
\end{equation}
where $\zeta_c > 0$ is a final community-dependent parameter. 

In our experiment, we take $k = 10$ communities/regions who are initialised to have $S_{c,0} = 0.9\cdot N$, $I_{c,0} = 0.1\cdot N$, $R_{c,0} = 0$. Interventions take the form of an action $a \in \{1,\ldots,k\}$ such that taking action $a$ sets $l_{a,t} = 0$ for $t \in [0.1 \cdot T, 0.5\cdot T]$. In our experiment, we further take $N=100$, and take the target variable to be the negative of the total number of individuals that are infected over time, i.e.:
\begin{equation}
    Y = - \sum_{c=1}^k \sum_{j=0}^{T / \Delta t} I_{c,j\cdot \Delta t}.
\end{equation}

\subsection{Approximate causal abstraction and abstracted model}\label{app:sirs_comms_abs}

To construct a high-level model $\mathcal{M}'$ and an approximate causal abstraction between $\mathcal{M}$ and $\mathcal{M}'$, we define a model in the same way as above using $k' = 4$ communities of $N' = 50$ individuals each, and group communities into composite communities via the $\tau$ map. The abstract model $\mathcal{M}'$ is also simulated with a larger time step size of $\Delta t' = 2$. Analogosuly to $\mathcal{M}$, we take the abstract target variable $Y'$ to be the negative of the total number of infections over time across all communities in the abstract model:
\begin{equation}
    Y' = - \sum_{c'=1}^{k'} \sum_{j=0}^{T/\Delta t'} I'_{c',j\Delta t'}.
\end{equation}

\subsection{Additional experimental details}\label{app:add_exp}

For both AT-UCB and UCB, we use $\delta = 0.1$ in our experiments.

Code for reproducing the experiments will be released upon acceptance.

All experiments were run on a Macbook Pro 2022 model with M2 chip, requiring $\sim 1$ hour of CPU time.

\section{Limitations of AT-UCB}
\label{app:lims}
As discussed in \Cref{sec:con}, AT-UCB has several limitations that we now discuss in more detail. Recall that \Cref{ass:abs_hor_len} provides a sufficient lower bound on $n'$, the number of exploratory actions taken in $\mathcal{B}^{'}$, which ensures the set $\mathcal{D}$ can be identified with high probability. Unfortunately the lower bound provided by \Cref{ass:abs_hor_len} depends on the optimality gap associated with each abstract arm $a' \in \mathcal{A'}$. As a result, setting $n'$ in practice requires deep knowledge of the abstract CMAB $\mathcal{B}'$, that is unlikely to be available a priori. 

Note that the AT-UCB algorithm bears resemblance to the traditional explore-then-commit (ETC) algorithm \citep{lattimore2020bandit}. Whilst ETC starts with an initial exploration phase comprised of uniform sampling before committing to the best empirical arm, AT-UCB performs uniform exploration at the abstract level before committing to a subset of of near-optimal arms. Similarly to AT-UCB, tuning the length of the exploration phase in ETC is challenging, with the optimal choice depending on the reward gaps of each arm. 

Observe that setting $n'$ in accordance with \Cref{ass:abs_hor_len} also requires knowledge of $\epsilon(\abs)$. Similarly, the regret bound provided by \Cref{prop:at-ucb_ecr} holds only when the hyperparameter $\varepsilon$ is set to $\epsilon(\abs)$. In other words, the theoretical guarantees presented rely on the learner knowing $\epsilon(\abs)$. In many scenarios, this requirement is not problematic. For example, the learner may be able to consult with a domain expert who can provide a trustworthy upper bound on $\epsilon(\abs)$. However, there are settings where it is unreasonable to expect the learner to know $\epsilon(\abs)$, such as when $\mathcal{M}$ and $\mathcal{M}'$ are hard-to-interpret black-box models. Note that $\epsilon(\abs)$ relies on the IC and RD errors, which both provide bounds on the divergence between reward distributions that hold uniformly over pairs of interventions. As a result, actively exploring both $\mathcal{B}$ and $\mathcal{B}'$ to learn $\epsilon(\abs)$ is costly, and developing CAMAB algorithms that do not rely on knowledge of $\epsilon(\abs)$ is non-trivial.

Finally, note that AT-UCB does not exploit any of the causal information encoded in the SCMs $\mathcal{M}$ and $\mathcal{M}'$. As discussed in \Cref{app:literatute_review}, many algorithms have been proposed for the CMAB setting that exploit partial knowledge of the underlying SCM to facilitate exploration. The fundamental ideas behind such algorithms could be used to augment the uniform exploration phase of AT-UCB when given appropriate partial knowledge of the abstract SCM $\mathcal{M}'$.

\end{document}